\documentclass[]{ifacconf}

\usepackage{enumitem}
\usepackage{graphicx}
\usepackage{natbib}
\let\theoremstyle\relax
\usepackage{amsmath,amsfonts,amssymb,bm,amsthm}
\usepackage{xcolor}
\usepackage{soul}
\usepackage{tikz}
\usetikzlibrary{angles, quotes}
\usepackage{url}
\usepackage{tikz}
\usetikzlibrary{positioning,arrows.meta}

\usepackage{comment}

\newtheorem{assumption}{Assumption}
\newtheorem{lemma}{Lemma}
\newtheorem{proposition}{Proposition}

\usepackage{todonotes}

\begin{document}

\begin{frontmatter}

\title{Wrench-Aware Admittance Control for Unknown-Payload Manipulation} 


\author[First]{Hossein Gholampour} 
\author[First]{Logan E. Beaver}

\address[First]{Department of Mechanical \& Aerospace Engineering, Old Dominion University, Norfolk, VA 23529, USA (e-mail: \{mghol004, lbeaver\}@odu.edu).}

\begin{abstract}                
Unknown payloads can strongly affect compliant robotic manipulation, especially when the payload center of mass is not aligned with the tool center point. In this case, the payload generates an offset wrench at the robot wrist. During motion, this wrench is not only related to payload weight, but also to payload inertia. If it is not modeled, the compliant controller can interpret it as an external interaction wrench, which causes unintended compliant motion, larger tracking error, and reduced transport accuracy. This paper presents a wrench-aware admittance control framework for unknown-payload pick-and-place using a UR5e robot. The method uses force-torque measurements in two different roles. First, a three-axis translational excitation term is used to reduce payload-induced force effects during transport without making the robot excessively stiff. Second, after grasping, the controller first estimates payload mass for transport compensation and then estimates the payload CoM offset relative to the TCP using wrist force-torque measurements collected during the subsequent translational motion. This helps improve object placement and stacking behavior. Experimental results show improved transport and placement performance compared with uncorrected placement while preserving compliant motion.
\end{abstract}

\begin{keyword}
Adaptive Control, Robot manipulators, Force Control, Center of Mass Estimation, Admittance Control
\end{keyword}

\end{frontmatter}

\section{INTRODUCTION}

Compliant control is important in robotic manipulation when the robot operates in uncertain environments or interacts physically with objects. Admittance control is widely used in such cases because it modifies robot motion according to measured external wrench, making it suitable for compliant manipulation on position-controlled industrial robots \citep{hogan1985impedance,Keemink2018,suomalainen2022survey}. This is useful in tasks such as material handling, packaging, bin-to-shelf transfer, and warehouse automation, where the robot must follow a motion plan while remaining responsive to contact even in the presence of modeling uncertainty.

A major difficulty arises when the robot manipulates an unknown payload. If the payload mass and mass distribution are not included in the nominal model, the wrench measured by the wrist-mounted force-torque (FT) sensor includes both the payload-induced wrench and the true interaction wrench, which cannot be separated directly. In compliant control, this uncompensated wrench influences the commanded motion, which can produce unintended compliant behavior and larger tracking error during transport. This issue becomes more pronounced when the payload center of mass (CoM) is not aligned with the tool center point (TCP). In that case, the payload generates an offset wrench at the wrist. Under motion, this is not limited to a static gravity moment; it also includes inertial force and moment components that can further disturb compliant motion if unmodeled or uncompensated \citep{GHOLAMPOUR2025,wen2025universal}. Accurate trajectory tracking under disturbance is also important in manipulation tasks, since deviations during transport can propagate to the final task outcome \citep{Gholampour2025Tracking}.

Increasing virtual stiffness is one way to limit payload-induced sag, but this weakens compliance and reduces the practical benefit of compliant manipulation. Recent work has studied adaptive and variable admittance strategies for uncertain interaction, including online parameter adaptation and improved disturbance handling \citep{Keemink2018,siciliano2016springer}. Our previous work addressed part of this problem by estimating unknown payload mass online and using an excitation force to reduce vertical sag during waypoint tracking \citep{GHOLAMPOUR2025}. That approach improved transport performance but was primarily concerned with payload-force effects and used objects with approximately symmetric mass distribution. In many practical tasks, the object CoM is offset from the TCP, so both transport and placement are affected by the resulting wrist wrench.

The CoM offset also matters in placement and stacking tasks. Even when the robot reaches the nominal placement pose, an off-centered object may not be placed in a balanced way after release. Recent work has shown that force-torque sensing can improve placement robustness under geometric or sensing uncertainty \citep{lerner2024precise,park2025experimental}, and that wrist wrench information can be used to estimate object mass and CoM for manipulation tasks \citep{Wang2021parameter,wen2025universal}. These results motivate using wrist wrench measurements not only for compliant transport, but also for placement correction.

This paper presents a wrench-aware admittance control framework for compliant pick-and-place of unknown payloads. Force-torque measurements serve two roles. First, during transport, a three-axis translational excitation term reduces payload-induced force effects within the admittance loop without increasing virtual stiffness. Second, after grasping, the controller first estimates payload mass for transport compensation and then estimates the payload CoM offset relative to the TCP using wrist force-torque measurements collected during the subsequent translational motion. The filtered offset estimate is then used to correct the horizontal placement waypoint so that the object, rather than just the TCP, is placed closer to its equilibrium location on the support surface.Unlike approaches that estimate payload properties in a separate static or dedicated identification stage, the present method performs CoM-offset estimation during the manipulation task itself.

The method is implemented on a UR5e manipulator with Cartesian admittance control and waypoint-based motion. The main contribution is a task-embedded wrench-aware manipulation framework that estimates the payload CoM offset online during a translational admittance-motion segment and uses that estimate directly for object-level placement correction, while also compensating payload-induced transport forces within the same compliant control loop. Repeated placement and stacking trials serve as additional validation of the estimated payload offset.

The rest of the paper is organized as follows. Section~\ref{sec:Formulation} presents the problem formulation. Section~\ref{sec:payload_model} presents the mass-estimation and transport-compensation model and develops the payload offset-estimation framework used for placement correction. Section~\ref{sec:results} presents the experimental setup and results. Section~\ref{sec:conclusion} concludes the paper.

\section{Problem Formulation} \label{sec:Formulation}

We consider a Cartesian admittance control framework for a robotic manipulator equipped with a wrist-mounted FT sensor. Let $\bm{p}_a \in \mathbb{R}^3$ denote the position generated by the admittance model, and let $\bm{p}_{ref} \in \mathbb{R}^3$ denote the reference position trajectory. We write the standard translational admittance model as
\begin{equation}
\bm{F} = M_a (\ddot{\bm{p}}_a - \ddot{\bm{p}}_{ref}) + B_a (\dot{\bm{p}}_a - \dot{\bm{p}}_{ref}) + K_a (\bm{p}_a - \bm{p}_{ref}) - \bm{F}_{\mathrm{exc}},
\label{eq:adm3d}
\end{equation}
where $M_a, B_a, K_a \in \mathbb{R}^{3 \times 3}$ are the virtual inertia, damping, and stiffness matrices, respectively, and $\bm{F}_{\mathrm{exc}} \in \mathbb{R}^3$ is an excitation or compensation term. 

For waypoint-level motion, the reference is piecewise constant within each segment, so $\dot{\bm{p}}_{ref}=0$ and $\ddot{\bm{p}}_{ref}=0$. Solving \eqref{eq:adm3d} for the admittance acceleration gives
\begin{equation}
\ddot{\bm{p}}_a
=
M_a^{-1}\!\left(
\bm{F} - B_a \dot{\bm{p}}_a - K_a(\bm{p}_a - \bm{p}_{ref}) + \bm{F}_{\mathrm{exc}}
\right),
\label{eq:adm_acc}
\end{equation}
and we obtain the commanded Cartesian velocity by integrating,
\begin{equation}
\dot{\bm{p}}_a(t) = \dot{\bm{p}}_a(t_0) + \int_{t_0}^{t} \ddot{\bm{p}}_a(\tau)\, d\tau .
\label{eq:adm_vel}
\end{equation}

Although the admittance model in \eqref{eq:adm3d} is translational, unknown payloads can still affect transport performance because the measured wrist force includes payload-induced force components in addition to the true interaction force. When the payload CoM is offset from the TCP, the measured wrist moment also contains information about that offset, which we later use in the payload-wrench model for placement correction.

\begin{assumption}
\label{ass:pure_translation}
During the transport and CoM-estimation segments, the payload remains rigidly attached to the TCP and the end effector undergoes translational motion with negligible angular velocity and angular acceleration, so that
\(\bm{\omega} \approx \bm{0},
\qquad
\bm{\alpha} \approx \bm{0}.\)
\end{assumption}

\begin{assumption}
\label{ass:wrench_quality}
During the CoM-estimation segment, the measured wrist wrench is sufficiently accurate, and the translational samples collected over that segment are sufficiently diverse that the stacked offset-estimation system has rank $3$.
\end{assumption}

\section{Mass Estimation and Transport Compensation} \label{sec:payload_model}

\subsection{Mass Estimation and Transport Compensation} \label{sec:transport_comp}
Building on the dynamic payload-mass estimate in \citet{GHOLAMPOUR2025}, we model the measured wrist force during translational motion as the sum of true interaction force and payload-induced force. We write
\begin{equation}
\bm{F} = \bm{F}_{\mathrm{int}} + \bm{F}_p,
\label{eq:force_decomp}
\end{equation}
where $\bm{F}_{\mathrm{int}} \in \mathbb{R}^3$ is the true interaction force and $\bm{F}_p \in \mathbb{R}^3$ is the payload-induced translational force. Substituting \eqref{eq:force_decomp} into \eqref{eq:adm_acc} gives
\begin{equation}
    \ddot{\bm{p}}_a = M_a^{-1}\!\left(\bm{F}_{\mathrm{int}} + \bm{F}_p - B_a \dot{\bm{p}}_a - K_a(\bm{p}_a - \bm{p}_{ref}) + \bm{F}_{\mathrm{exc}}
    \right).
    \label{eq:adm_force_split}
\end{equation}

The purpose of transport compensation is to reduce the effect of $\bm{F}_p$ without increasing virtual stiffness. Defining the residual payload force
\begin{equation}
    \tilde{\bm{F}}_p := \bm{F}_p - \bm{F}_{\mathrm{exc}},
    \label{eq:force_residual}
\end{equation}
gives
\begin{equation}
    \ddot{\bm{p}}_a = M_a^{-1}\!\left( \bm{F}_{\mathrm{int}} + \tilde{\bm{F}}_p - B_a \dot{\bm{p}}_a - K_a(\bm{p}_a - \bm{p}_{ref}) \right).
    \label{eq:adm_residual}
\end{equation}
Thus, if $\bm{F}_{\mathrm{exc}}$ approximates $\bm{F}_p$, the admittance response is driven mainly by the true interaction force.

Following \citet{GHOLAMPOUR2025}, we estimate the payload mass during translational motion as
\begin{equation}
\hat{m}_u(k) = \frac{f_{p,z}(k)}{a_{TCP,z}(k) - g},
\label{eq:mass_est}
\end{equation}
where $f_{p,z}(k)$ denotes the payload-induced vertical force component and $a_{TCP,z}(k)$ is the measured TCP linear acceleration in the same direction. We then form the translational payload-force estimate as
\begin{equation}
\bm{F}_{\mathrm{exc}} = \hat{\bm{F}}_p,
\qquad
\hat{\bm{F}}_p(k) = \hat{m}_u(k)\big(\bm{a}_{TCP}(k) - \bm{g}\big).
\label{eq:fp_3d_est}
\end{equation}
Equation \eqref{eq:fp_3d_est} is the payload-force model used in the transport-compensation law.

\subsection{Payload Offset Estimation from Wrist Wrench} \label{sec:offset_est}

Under Assumption~\ref{ass:pure_translation}, Let $\bm{f} \in \mathbb{R}^3$ and $\bm{\tau} \in \mathbb{R}^3$ denote the measured force and moment at the wrist. When the payload CoM is not aligned with the TCP, the measured moment is influenced both by the CoM offset and by the payload rotational dynamics. The CoM linear acceleration satisfies
\begin{equation}
\bm{a}_C = \bm{a}_{TCP} + \bm{\alpha} \times \bm{r} + \bm{\omega} \times (\bm{\omega} \times \bm{r}),
\label{eq:aC_relation}
\end{equation}
where $\bm{a}_{TCP}$ is the TCP linear acceleration, $\bm{r} \in \mathbb{R}^3$ is the CoM offset relative to the TCP, and $\bm{\omega},\bm{\alpha} \in \mathbb{R}^3$ are the angular velocity and angular acceleration. The rigid-body wrench relation can then be written as
\begin{equation}
\bm{f} = m(\bm{a}_C - \bm{g}),
\label{eq:force_dynamic}
\end{equation}
\begin{equation}
\bm{\tau} = \bm{r} \times \bm{f} + I_C \bm{\alpha} + \bm{\omega} \times I_C \bm{\omega},
\label{eq:moment_full}
\end{equation}
where $m$ is the payload mass, $g$ is gravity, and $I_C$ is the payload inertia tensor about the CoM.

Equation \eqref{eq:force_dynamic} shows that the CoM offset affects the translational payload force during motion, while \eqref{eq:moment_full} shows its effect on the wrist moment. Under Assumption~\ref{ass:pure_translation}, the rotational inertia term
\begin{equation}
\bm{\tau}_{\mathrm{rot}} := I_C \bm{\alpha} + \bm{\omega} \times I_C \bm{\omega}
\label{eq:tau_rot}
\end{equation}
is negligible, so \eqref{eq:moment_full} reduces to
\begin{equation}
\bm{\tau} \approx \bm{r} \times \bm{f}.
\label{eq:moment_reduced}
\end{equation}

\begin{lemma} \label{lem:rvector}
For a nonzero force sample $\bm{f} \in \mathbb{R}^3$, \eqref{eq:moment_reduced} does not uniquely determine $\bm{r}$.
\end{lemma}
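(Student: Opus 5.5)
The plan is to show that the linear map $\bm{r} \mapsto \bm{r}\times\bm{f}$ has a nontrivial kernel whenever $\bm{f}\neq\bm{0}$. Then any solution of \eqref{eq:moment_reduced} can be shifted along that kernel without changing $\bm{\tau}$. Concretely, write $\bm{r}\times\bm{f} = -[\bm{f}]_\times \bm{r}$, where $[\bm{f}]_\times$ is the skew-symmetric cross-product matrix of $\bm{f}$. For fixed $\bm{f}$, \eqref{eq:moment_reduced} is then the linear system $-[\bm{f}]_\times\bm{r} = \bm{\tau}$, and the question reduces to whether this $3\times 3$ system has a unique solution.

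The first step is to exhibit the kernel directly. For any scalar $\lambda$, $(\lambda\bm{f})\times\bm{f} = \lambda(\bm{f}\times\bm{f}) = \bm{0}$. So if $\bm{r}$ satisfies \eqref{eq:moment_reduced}, then so does $\bm{r}+\lambda\bm{f}$:
\[
(\bm{r}+\lambda\bm{f})\times\bm{f} = \bm{r}\times\bm{f} + \lambda\,\bm{f}\times\bm{f} = \bm{r}\times\bm{f} = \bm{\tau}.
\]
Because $\bm{f}\neq\bm{0}$, different values of $\lambda$ give genuinely different vectors $\bm{r}+\lambda\bm{f}$. Hence, whenever a solution exists, there is a whole line of solutions, and $\bm{r}$ is not uniquely determined.

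For completeness I would also record that $[\bm{f}]_\times$ has rank exactly $2$ when $\bm{f}\neq\bm{0}$. Its kernel is $\mathrm{span}\{\bm{f}\}$: if $\bm{r}\times\bm{f}=\bm{0}$ then $\bm{r}$ is parallel to $\bm{f}$. This shows the non-uniqueness is exactly one-dimensional, i.e. only the component of $\bm{r}$ along $\bm{f}$ is unobservable from a single sample. It also motivates the rank-$3$ condition in Assumption~\ref{ass:wrench_quality}, which requires stacking force samples with different directions.

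The only delicate point is existence. The lemma is about uniqueness, so it suffices to take $\bm{\tau}$ generated by some true offset $\bm{r}$ through \eqref{eq:moment_reduced}, which guarantees at least one solution. I would state this explicitly. I would also note that for an arbitrary $\bm{\tau}$ with $\bm{\tau}\cdot\bm{f}\neq 0$ there is no exact solution at all, which again shows that a single sample cannot determine $\bm{r}$ uniquely.
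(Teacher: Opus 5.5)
Your proof is correct and follows the same route as the paper: you take an existing solution $\bm{r}$ and show that $\bm{r}+\lambda\bm{f}$ produces the same moment because $\bm{f}\times\bm{f}=\bm{0}$. Your additional remarks are accurate refinements of points the paper leaves implicit: you use $\bm{f}\neq\bm{0}$ to guarantee the shifted solutions are distinct, you identify the kernel of $[\bm{f}]_\times$ as $\mathrm{span}\{\bm{f}\}$, and you state the existence caveat explicitly.
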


\begin{proof}
Assume that $\bm{r}$ is one solution of \eqref{eq:moment_reduced}. For any scalar $\lambda \in \mathbb{R}$, define
\begin{equation*}
\tilde{\bm{r}} = \bm{r} + \lambda \bm{f}.
\end{equation*}
Then
\begin{equation*}
\tilde{\bm{r}} \times \bm{f}
=
(\bm{r} + \lambda \bm{f}) \times \bm{f}
=
\bm{r} \times \bm{f} + \lambda (\bm{f} \times \bm{f})
=
\bm{r} \times \bm{f}
=
\bm{\tau}.
\end{equation*}
This completes the proof.
\end{proof}
Any component of $\bm{r}$ parallel to $\bm{f}$ is therefore unobservable from the single wrench sample, so the full three-dimensional offset is not uniquely determined.

\begin{proposition} \label{prop:dynamic_offset}
Under Assumption~\ref{ass:pure_translation}, consider measured wrench samples
\[
\{(\bm{f}_k,\bm{\tau}_k)\}_{k=1}^N
\]
collected over a translational segment. Then
\begin{equation}
\bm{\tau}_k \approx \bm{r} \times \bm{f}_k
= -[\bm{f}_k]_\times \bm{r},
\qquad k=1,\dots,N,
\label{eq:stacked_moment_samples}
\end{equation}
defines an approximate linear system in the unknown offset vector $\bm{r}$, where $[\bm{f}_k]_\times$ denotes the skew-symmetric matrix associated with $\bm{f}_k$. The offset is identifiable from the stacked samples if the coefficient matrix
\begin{equation}
A :=
\begin{bmatrix}
-[\bm{f}_1]_\times \\
\vdots \\
-[\bm{f}_N]_\times
\end{bmatrix}
\label{eq:A_stacked}
\end{equation}
has rank $3$.
\end{proposition}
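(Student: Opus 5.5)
The plan is to split the statement into two parts: first derive the linear form of each sample equation, then show that rank $3$ forces a unique solution. Under Assumption~\ref{ass:pure_translation}, the reduction leading to \eqref{eq:moment_reduced} holds at every sample time. The key observation is that the payload offset $\bm{r}$ is constant over the segment, since the payload is rigidly attached to the TCP. Hence each sample gives $\bm{\tau}_k \approx \bm{r}\times\bm{f}_k$ with the same unknown $\bm{r}$.

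The first step is to rewrite each cross product as a matrix–vector product. For any $\bm{a},\bm{b}$ we have $\bm{a}\times\bm{b} = [\bm{a}]_\times \bm{b}$, and anticommutativity gives $\bm{r}\times\bm{f}_k = -\bm{f}_k\times\bm{r} = -[\bm{f}_k]_\times\bm{r}$. I would check this entrywise against the standard skew matrix. This yields \eqref{eq:stacked_moment_samples}. Stacking the $N$ relations gives
\[
\bm{b} := \begin{bmatrix}\bm{\tau}_1\\ \vdots\\ \bm{\tau}_N\end{bmatrix} \approx A\,\bm{r},
\]
with $A\in\mathbb{R}^{3N\times 3}$ as in \eqref{eq:A_stacked}. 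This is an (approximately) linear system in $\bm{r}$.

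For identifiability, suppose $\bm{r}$ and $\bm{r}'$ both satisfy the exact system. Then $A(\bm{r}-\bm{r}')=0$. Since $A$ has $3$ columns, rank $3$ means $\ker A=\{0\}$, so $\bm{r}=\bm{r}'$. Equivalently, $A^\top A$ is invertible. In the approximate or noisy setting, the least-squares solution $\hat{\bm{r}}=(A^\top A)^{-1}A^\top\bm{b}$ is then unique, which is the practical meaning of identifiability here.

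It is worth relating this to Lemma~\ref{lem:rvector}. Each block $[\bm{f}_k]_\times$ has rank at most $2$, with kernel $\operatorname{span}(\bm{f}_k)$. So $\ker A=\bigcap_k \operatorname{span}(\bm{f}_k)$, and rank $3$ is exactly the condition that no single direction is shared by all the force samples. Equivalently, it requires at least two non-parallel force samples.

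The main obstacle is interpretive rather than computational. The statement says ``$\approx$'', so I need to be explicit that identifiability is a claim about the idealized system $\bm{b}=A\bm{r}$, and that under noise it means the least-squares estimate is unique. With that framing, the remaining steps are routine linear algebra.
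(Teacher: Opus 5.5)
Your proposal is correct and follows the same route as the paper's proof. In both, you reduce \eqref{eq:moment_full} to \eqref{eq:moment_reduced} at each sample, write $\bm{r}\times\bm{f}_k=-[\bm{f}_k]_\times\bm{r}$, stack the equations, and conclude uniqueness from rank $3$; your kernel argument, the constancy of $\bm{r}$, and the least-squares remark only spell out details the paper leaves implicit.

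One small caveat concerns your side remark. The identity $\ker[\bm{f}_k]_\times=\operatorname{span}(\bm{f}_k)$ holds only for $\bm{f}_k\neq\bm{0}$, since a zero sample has kernel $\mathbb{R}^3$. So $\ker A=\bigcap_k\operatorname{span}(\bm{f}_k)$ must be taken over the nonzero samples; otherwise a single zero sample would wrongly make the intersection trivial.
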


\begin{proof}
Under Assumption~\ref{ass:pure_translation}, \eqref{eq:moment_full} reduces to \eqref{eq:moment_reduced}. Applying \eqref{eq:moment_reduced} to each sample gives \eqref{eq:stacked_moment_samples}. Stacking these equations yields a linear system in $\bm{r}$. The solution is unique when the stacked coefficient matrix has rank $3$. This completes the proof.
\end{proof}
Lemma \ref{lem:rvector} shows that a single wrench sample does not uniquely determine $\bm{r}$. Proposition \ref{prop:dynamic_offset} shows that samples collected during translational motion can be combined to estimate the payload offset vector from measured force-torque data.

\section{Experimental Setup and Results} \label{sec:results}

\subsection{Experimental Setup}
Figure \ref{fig:task_schematic} shows the designed pick-and-place sequential task in the X-Z plane. The robot first approaches the object through Waypoints 1 and 2, grasps it at Waypoint 3, and then moves toward Waypoint 4. During this post-grasp translational motion, the controller first estimates payload mass for transport compensation and then estimates the payload CoM offset relative to the TCP. The robot then moves through Waypoint 5 and places the object at Waypoint 6 on the equilibrium bar. The schematic also shows that the object CoM is offset from the geometric center along the object length. Therefore, the nominal placement point, which would align the object length center with the support location, is not sufficient for balanced placement. Instead, the controller corrects the release point using the measured offset so that the final placement aligns the support location with the object CoM.

\begin{figure}[t]
\centering
\includegraphics[width=\linewidth]{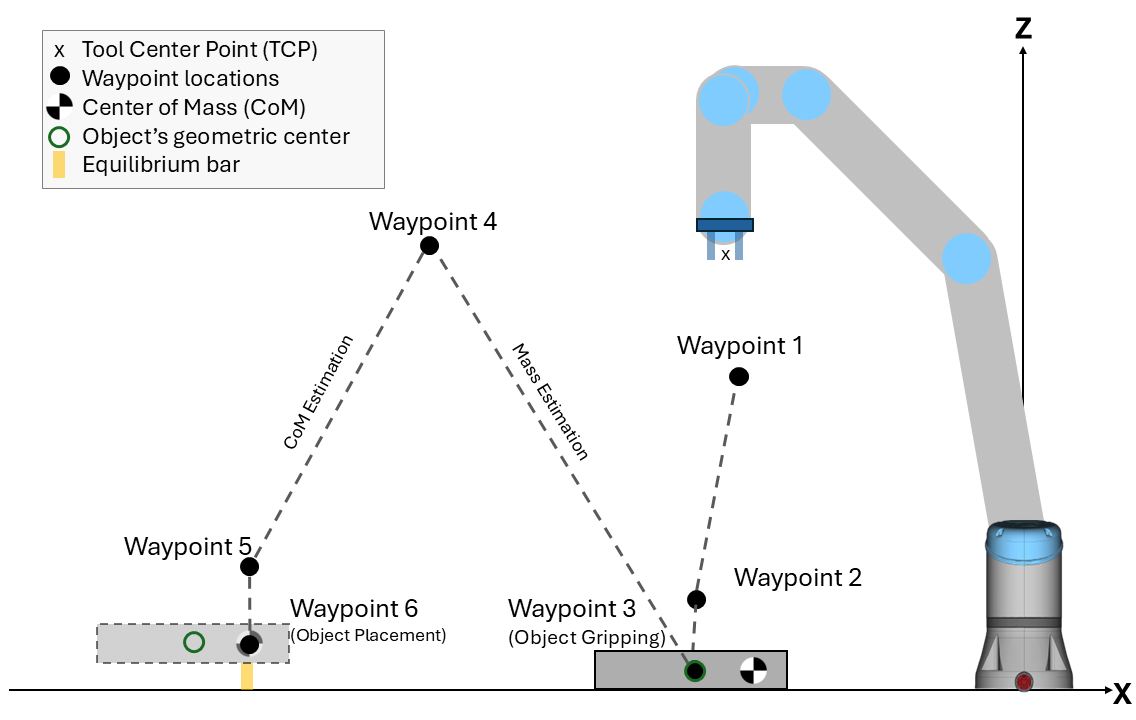}
\caption{Task schematic and planned waypoints in the X-Z plane.}
\label{fig:task_schematic}
\end{figure}

We implement the method on a UR5e manipulator equipped with a wrist-mounted FT sensor and a Robotiq 2F-140 gripper. A Linux PC communicates with the robot through RTDE and sends the Cartesian velocity commands generated by the translational admittance controller.
Our code is available online\footnote{\url{https://github.com/Hossein-ghd/ur5e-com-aware-admittance}}.

The validation task is a waypoint-based pick-and-place experiment using off-centered objects and a narrow support bar that defines the desired equilibrium line. We design the payload offset along the x direction, so the controller applies the placement correction in x while the task geometry sets the vertical placement coordinate. After grasping, the robot executes a translational measurement segment before placement. During this segment, the controller collects force-torque samples at the wrist and computes the payload offset estimate from the reduced wrench model developed in Section~\ref{sec:offset_est}. The filtered estimate then shifts the commanded TCP release position according to \eqref{eq:place_cmd}. In this way, the corrected command targets the object equilibrium location on the support, rather than only the nominal TCP placement point.

We designed the experiments to validate the estimated offset through the final object placement outcome, rather than only through TCP positioning accuracy. If the estimate is accurate, the corrected TCP release position places the object so that its CoM lies closer to the equilibrium line after release. This is important for off-centered objects because the TCP can reach a nominal placement point while the object itself still lands in an unbalanced position. We therefore use trials with different grasp locations on the same object, as well as repeated placement and stacking trials, to evaluate the consistency of the estimated offset and the corrected placement command. The green tape markers show the physical CoM location only as an offline reference for validation and plotting; the controller does not use them.

A practical issue in these experiments is grasp quality. The theoretical model assumes that the object remains rigidly attached to the gripper during the measurement and transport stages. In practice, however, off-centered grasps can generate a large moment at the fingertip-object interface and cause torque-induced micro-slip even when the payload weight is supported. This effect becomes more pronounced for larger CoM offsets and during robot motion. To reduce it, we added friction-enhancing material at the fingertip-object contact surfaces. This improves grasp repeatability and reduces wrench corruption caused by relative motion between the object and the gripper.

Figure~\ref{fig:exp_setup} shows the experimental setup used for the placement and stacking validation tasks.

\begin{figure}[ht]
\centering
\includegraphics[width=1\linewidth]{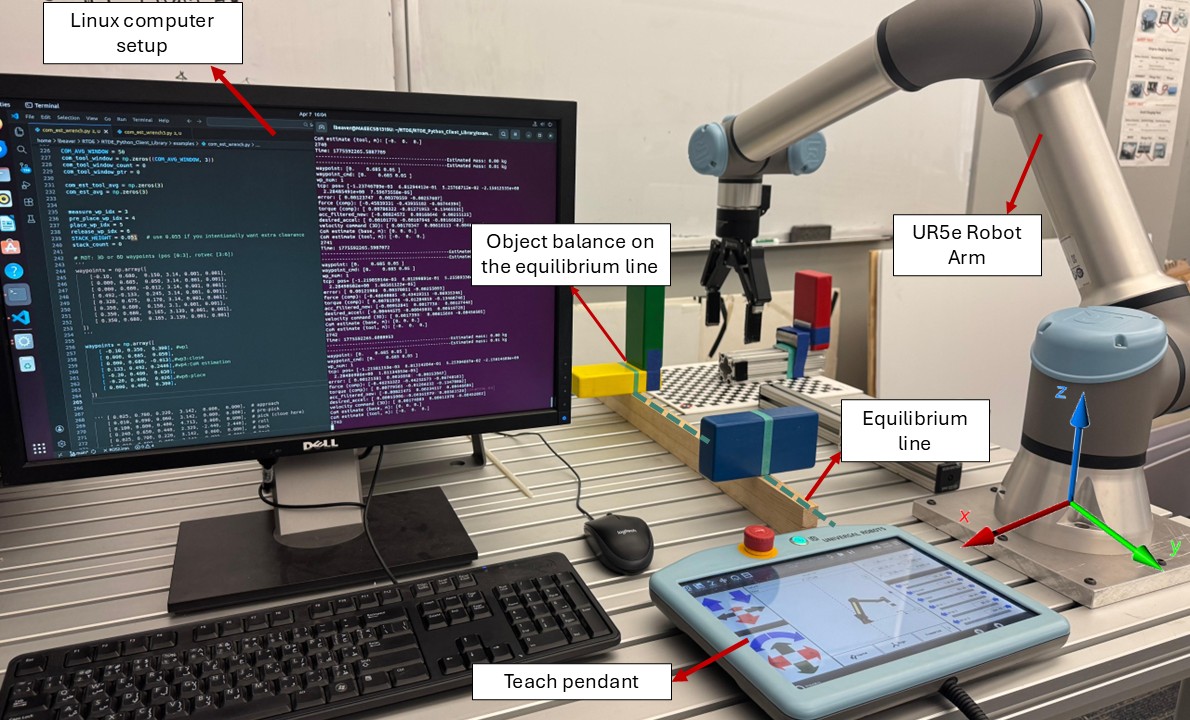}
\caption{Experimental setup used for placement and stacking validation.}
\label{fig:exp_setup}
\end{figure}

\subsection{Offset Estimation and TCP Response}

We next examine a representative trial during the short translational segment used for payload-offset estimation. Figure~\ref{fig:xcorr_time} shows the x-direction time histories for this stage.

The top plot shows the estimated CoM offset in the x direction, together with its filtered value and the reference offset used for validation over the gripping interval. A noticeable initial spike appears at the gripping moment. This transient corresponds to the pickup event, before the object settles in the gripper, and can reflect short grasp-induced motion, local contact effects, and payload sag before the transport-compensation estimate becomes effective. After this initial transient, and before the CoM measurement window begins, the controller uses this interval to estimate the payload mass and apply force compensation. Once the CoM measurement stage starts, the estimated x-offset converges toward the reference value and remains close to it over the rest of the plotted interval. For this representative trial, the x-offset RMSE, computed from the start of the measurement window to the end of the plotted interval, is 3.5 mm.

The bottom plot shows the TCP motion in the x direction over the same gripping interval, together with the ideal admittance trajectory. The measured TCP motion remains close to the ideal motion during the estimation stage. For this representative trial, the corresponding TCP x-position RMSE over the same interval is 1.2 mm. These results show that the controller can estimate the CoM offset during translational motion, without requiring a separate rotational or static measurement step, while keeping the TCP motion close to the desired response.

\begin{figure}[ht]
\centering
\includegraphics[width=0.92\linewidth]{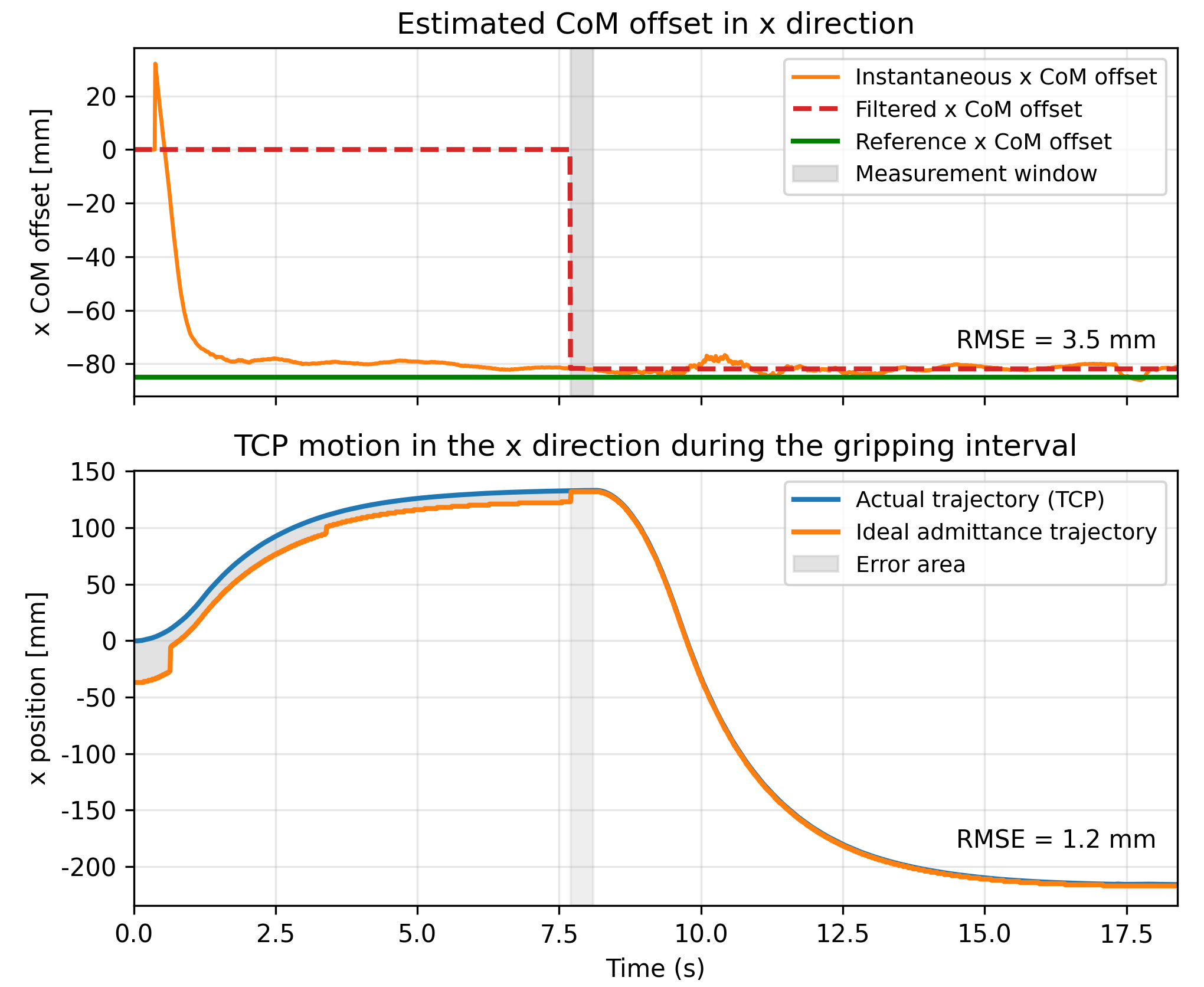}
\caption{Representative gripping-interval response in the x direction. Top: estimated CoM offset. Bottom: TCP motion.}
\label{fig:xcorr_time}
\end{figure}

\subsection{Placement Validation}
The CoM offset also affects the final placement stage. Let $\bm{p}_{\mathrm{place,nom}} \in \mathbb{R}^3$ denote the nominal TCP placement position. If the controller moves the TCP to $\bm{p}_{\mathrm{place,nom}}$ without accounting for the payload offset, the object itself may not reach the desired support location. This can lead to placement error, loss of equilibrium, or less reliable repeated placements, even when the transport trajectory is tracked reasonably well.

To account for this effect, the controller corrects the placement command using the estimated horizontal offset. Let $\hat{\bm{r}} = [\hat{r}_x,\hat{r}_y,\hat{r}_z]^T$ denote the estimated payload offset relative to the TCP. We define the corrected placement command as
\begin{equation}
\bm{p}_{\mathrm{place,cmd}} = \bm{p}_{\mathrm{place,nom}} - [\hat{r}_x,\hat{r}_y,0]^T.
\label{eq:place_cmd}
\end{equation}
In this study, we apply the correction only in the horizontal plane, while the task geometry fixes the vertical placement coordinate.

We next evaluate the final placement result for a representative trial. Figure~\ref{fig:placement_geom} shows the geometry in the support plane. The blue cross marks the desired equilibrium point on the support. The controller does not move the TCP to that point directly. Instead, it shifts the TCP release position by the estimated correction vector so that, after release, the object lies closer to its equilibrium location on the support.

The same figure compares the desired object pose with the final implemented pose. The green circle and dashed rectangle mark the desired corrected release position and the corresponding desired object pose. The red circle and solid rectangle mark the final release position and object pose after the controller applies the estimated correction and executes the placement command. The two poses remain close, which shows that the corrected release point is consistent with the desired object placement on the support. 
The remaining gap reflects both the correction-command error from the estimated CoM offset and a smaller execution error during the final placement motion.

Figure~\ref{fig:final_error_zoom} shows the estimated horizontal CoM offset in the TCP frame for the same representative trial. This figure is plotted in the TCP-relative offset coordinates $(r_x,r_y)$ and is not a zoom of the placement geometry in Fig.~\ref{fig:placement_geom}. The orange circles show the sample-by-sample offset estimates over the measurement window, the red circle shows the filtered offset estimate, and the green circle shows the reference offset used for validation. The remaining difference between the filtered estimate and the reference value represents the offset-estimation error.
Because the placement command uses the negative of the estimated offset in \eqref{eq:place_cmd}, the relative shift between the filtered and reference points in Fig.~\ref{fig:final_error_zoom} appears with the opposite sign in the placement positions shown in Fig.~\ref{fig:placement_geom}.

Table~\ref{tab:placement_summary} reports the representative placement values for this trial in the correction convention used for the TCP placement shift. The table lists the target equilibrium point, the ideal correction vector from the manual reference offset, the corresponding ideal corrected TCP point, the estimated correction vector obtained from the measurement stage, and the commanded TCP release position computed from that estimate. The difference between the ideal and estimated quantities is consistent with the offset-estimation error shown in Fig.~\ref{fig:final_error_zoom}.

\begin{figure}[t]
\centering
\includegraphics[width=\linewidth]{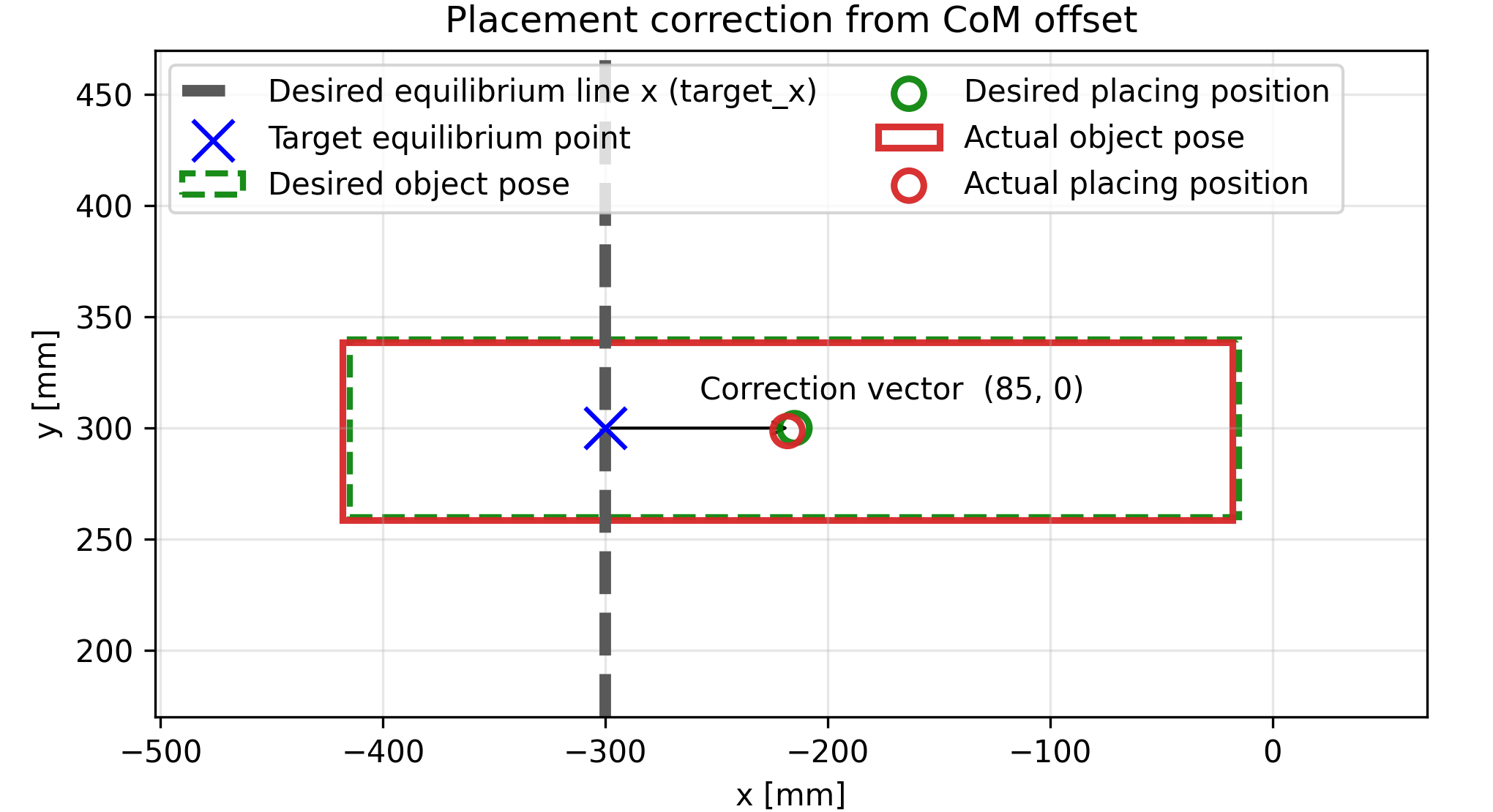}
\caption{Placement geometry for a representative trial.}
\label{fig:placement_geom}
\end{figure}

\begin{figure}[t]
\centering
\includegraphics[width=0.9\linewidth]{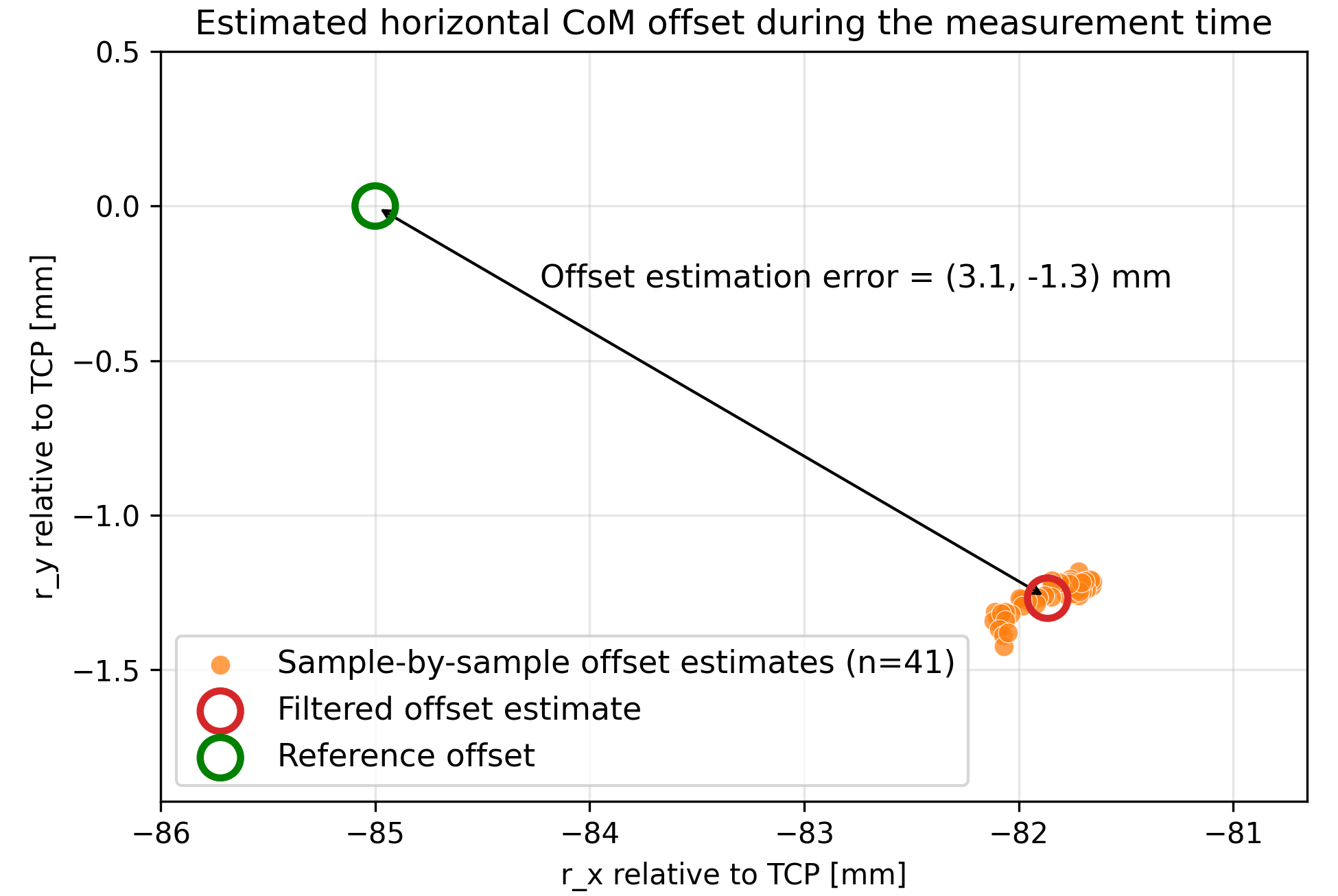}
\caption{Estimated horizontal CoM offset in the TCP frame for the representative trial.}
\label{fig:final_error_zoom}
\end{figure}

\begin{table}[t]
\caption{Representative placement geometry for one trial.}
\label{tab:placement_summary}
\centering
\footnotesize
\begin{tabular}{lc}
\hline
Quantity & Value [mm] \\
\hline
Target equilibrium point $(x,y)$ & $(-300.00,\ 300.00)$ \\
Ideal correction vector $(c_x,c_y)$ & $(85.00,\ 0.00)$ \\
Ideal corrected TCP $(x,y)$ & $(-215.00,\ 300.00)$ \\
Estimated correction vector $(c_x,c_y)$ & $(82.84,\ -1.66)$ \\
Commanded TCP release position $(x,y)$ & $(-217.16,\ 298.34)$ \\
Correction-command error & $2.73$ \\
Actual TCP release position $(x,y)$ & $(-218.12,\ 298.69)$ \\
Actual release error relative to ideal & $3.38$ \\
Execution error relative to command & $1.03$ \\
\hline
\end{tabular}
\end{table}

To show the effect of the correction more directly, we also compare the corrected case with a nominal placement case in which the controller does not apply the estimated CoM correction. Figure \ref{fig:uncorrected_case} shows this nominal case. The black cross marks the target equilibrium point, the red circle marks the final TCP position, and the red rectangle shows the final object pose after placement. The blue cross marks the actual CoM location. In this trial, the robot places the object body near the nominal target, but the object CoM remains displaced from the equilibrium line by the payload offset. The arrow from the TCP position to the CoM location shows this offset, and the separation between the CoM location and the equilibrium line shows the resulting alignment error. This result shows that the nominal placement can keep the object body near the commanded position while still failing to satisfy the object-level equilibrium condition.

\begin{figure}[ht]
\centering
\includegraphics[width=\linewidth]{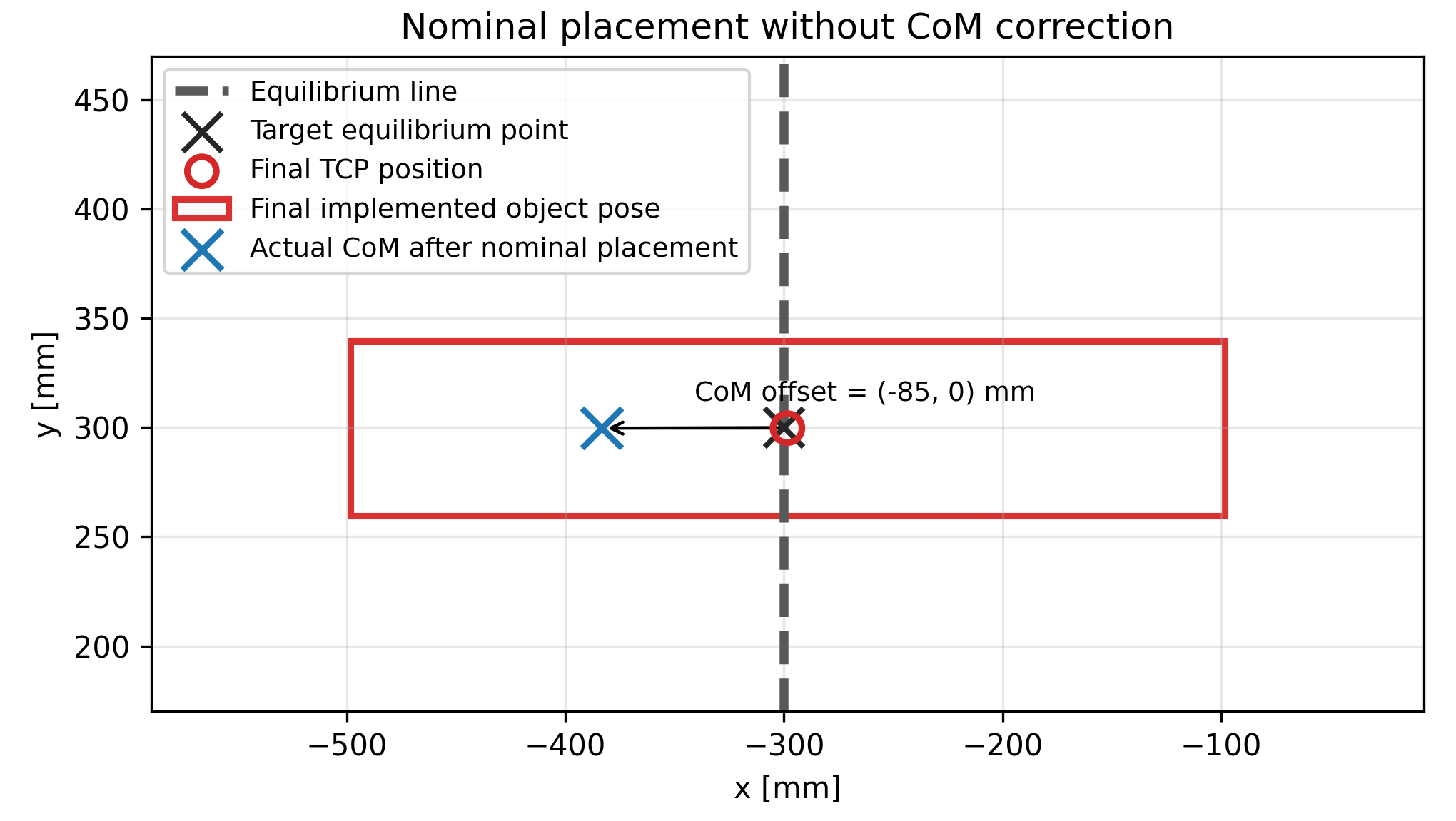}
\caption{Nominal placement without CoM correction.}
\label{fig:uncorrected_case}
\end{figure}

\subsection{Repeated Placement and Stacking Validation}
For repeated validation trials, we extend the corrected placement command across successive layers as
\begin{equation}
\bm{p}_{\mathrm{place,cmd}}^{(n)} = \bm{p}_{\mathrm{place,nom}} - [\hat{r}_x,\hat{r}_y,0]^T + [0,0,nh]^T,
\label{eq:place_cmd_stack}
\end{equation}
where $n$ is the layer index and $h$ is the object height or effective layer spacing.

We use these repeated placements only to validate the consistency of the offset estimate and the corrected placement command.

We finally evaluate repeated placement and stacking using the corrected placement rule in \eqref{eq:place_cmd_stack}. We designed this task to validate the estimated correction across successive layers rather than only for a single placement. Before the trials, we manually measured the physical CoM location of each object and marked it with the same green ribbon as an offline reference. During each task cycle, the controller estimated the CoM offset from wrist force-torque data during the translational measurement segment and used that estimate to compute the corrected release position.

Figure~\ref{fig:stacking_validation} shows a representative three-layer stacking result in the x-z plane. The dashed vertical line denotes the equilibrium line, and the green markers denote the projected CoM locations of the placed blocks. For all three layers, the projected CoM locations remain close to the equilibrium line after release, and the stacked blocks remain balanced.

Table~\ref{tab:stacking_summary} summarizes the stacking-validation results. The table lists the manual CoM x-location as the reference value, the estimated CoM x-location from the measurement stage, and the implemented CoM x-location after placement. The agreement among these values shows that the estimated correction remains effective across repeated placements and stacking steps.

\begin{figure}[ht]
\centering
\includegraphics[width=\linewidth]{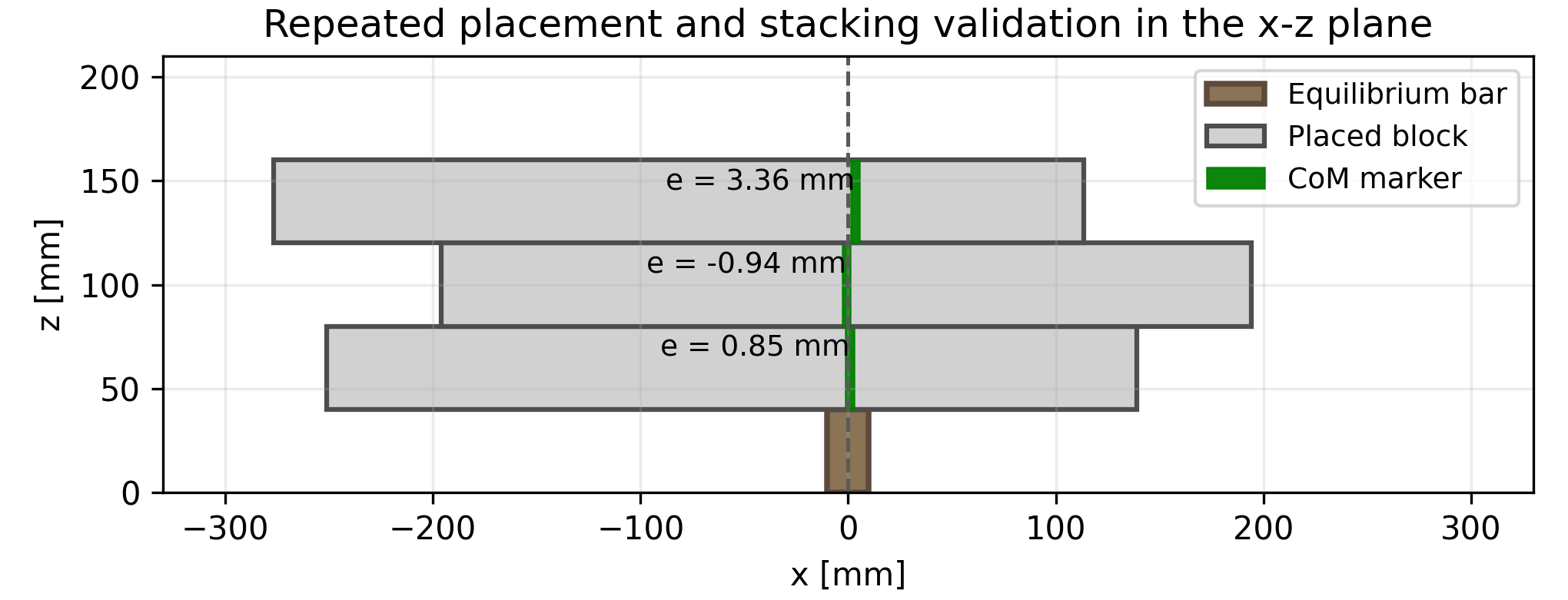}
\caption{Repeated placement and stacking validation in the x-z plane.}
\label{fig:stacking_validation}
\end{figure}

\begin{table}[ht]
\caption{Stacking validation summary across repeated placements.}
\label{tab:stacking_summary}
\centering
\footnotesize
\begin{tabular}{lccc}
\hline
Object & Actual CoM & Estimated CoM & Implemented CoM \\
\hline
Object 1 & -57.00 & -56.44 & -56.20 \\
Object 2 &  0.00& -0.57 & -0.61 \\
Object 3 &  -85.00& -81.91 & -81.60 \\
\hline
\end{tabular}
\end{table}

\section{Conclusions} \label{sec:conclusion}

This paper presented a wrench-aware admittance control framework for unknown-payload pick-and-place using a UR5e manipulator. The method used force-torque measurements in two roles within the same framework. First, a three-axis translational excitation term reduced payload-induced force effects during transport without increasing stiffness. Second, a short translational measurement stage after grasping provided a payload-offset estimate from wrench samples, and the controller used the filtered estimate to correct the horizontal placement command.

The experiments showed that the controller could estimate the payload offset during translational motion while keeping the TCP response close to the desired admittance behavior. In a representative trial, the x-offset RMSE over the post-measurement interval was 3.5 mm, the corresponding TCP x-position RMSE was 1.2 mm, and the corrected placement achieved a final TCP release error of 3.38 mm. The nominal trial further showed that accurate TCP placement alone does not guarantee balanced object placement when the payload CoM is offset from the TCP. Repeated stacking trials also supported the consistency of the estimated offset and the corrected placement command across successive layers.

Overall, the results show that force-based transport compensation and wrench-based placement correction can be integrated in a single admittance-control framework for unknown-payload manipulation. A natural next step is to relax the pure-translation assumption and extend the framework to rotational admittance, so that the controller can account for both translational and rotational payload effects during more general manipulation tasks.

\end{document}